\documentclass{article}

\usepackage{microtype}
\usepackage{graphicx}
\usepackage{subcaption}
\usepackage{booktabs} 

\usepackage{hyperref}

\usepackage[accepted]{icml2026}
\setcitestyle{numbers,square,sort&compress,comma}

\usepackage{amsmath}
\usepackage{amssymb}
\usepackage{mathtools}
\usepackage{amsthm}

\usepackage{algorithm}
\usepackage{algpseudocode}
\usepackage{makecell}

\usepackage[capitalize,noabbrev]{cleveref}

\usepackage{caption}
\theoremstyle{plain}
\newtheorem{theorem}{Theorem}[section]

\theoremstyle{definition}

\theoremstyle{remark}
\newtheorem{remark}[theorem]{Remark}

\usepackage[textsize=tiny]{todonotes}

\begin{document}

\twocolumn[
\icmltitle{Detecting Adversarial Data via Provable Adversarial Noise Amplification}

\vskip 0.15in

\begin{center}
{\fontsize{11.25pt}{13.5pt}\bfseries\selectfont
Furkan Mumcu,
Yasin Yilmaz
}

{\fontsize{11.25pt}{13.5pt}\selectfont
University of South Florida
}

{\tt\small
\{furkan, yasiny\}@usf.edu
}
\end{center}

\vskip 0.25in
]



\printAffiliationsAndNotice{}  

\begin{abstract}

The nonuniform and growing impact of adversarial noise across the layers of deep neural networks has been used in the literature, without a formal mathematical justification, to detect adversarial inputs and improve robustness. In this work, we study this phenomenon in detail and present a formal  adversarial noise amplification theorem. We specify a set of sufficient conditions under which the adversarial noise amplification is mathematically guaranteed. Based on theoretical observations, we propose a novel training methodology with a custom spectral loss function and a specific architectural design to enhance the amplification signal for detecting adversarial data. Finally, we introduce a new, lightweight detection mechanism that leverages the enhanced  amplification signal and operates entirely at inference time. To validate our approach, we demonstrate the detector's efficacy against both state-of-the-art attacks and a purpose-built adaptive attack, confirming that enhanced amplification can serve as a robust and reliable signal for adversarial defense.

\end{abstract}
\section{Introduction}

Deep Neural Networks (DNNs) are known to be vulnerable to adversarial machine learning (ML) attacks, which manipulate the input data with small, carefully crafted noise to 
cause misclassification \cite{fgsm, pgd, bim, mumcu2022adversarial, aa, vmi_vni}. In response to the rapid growth in adversarial ML research, several defense strategies have been proposed. These can be broadly categorized into methods that alter the input to remove or reduce the noise's effect, efforts to develop inherently robust DNNs, and 
detectors to identify and reject adversarially altered data.

Among these defense strategies, several works have focused on leveraging the nonuniform impact of adversarial noise on network layers. 
\cite{dknn} uses it to increase DNN robustness to adversarial data. 
\cite{lr} recently introduced
a detector, Layer Regression (LR), 
based on this principle. The effectiveness of LR, empirically demonstrated across various models and data domains, was built upon a central conjecture: \textit{the impact of adversarial noise is minimal at early layers and magnified at later layers.}

While this conjecture is supported by extensive empirical evidence, it lacks a theoretical guarantee that could 
provide a deeper insight and 
strengthen its validity. 
The primary contribution of this work is to elevate this conjecture to a formal 
\emph{adversarial noise amplification
theorem}. 
By presenting sufficient conditions under which the theorem holds,
we solidify the theoretical underpinnings of any defense mechanism that benefits from this property and contribute to the broader understanding of how adversarial data propagates through DNNs.

Furthermore, we propose 
a specific architectural design using Leaky ReLU and a custom spectral loss function to construct a DNN with enhanced adversarial noise amplification signal. We provide empirical evidence showing that standard, well-trained networks often exhibit the behavior predicted by our theorem, and demonstrate that our new training method can enforce this property more rigorously. This work serves as a constructive proof that networks can be designed to have provable amplification properties, providing a reliable signal for detection.

\begin{figure}[t]
    \centering
    \includegraphics[width=0.99\linewidth]{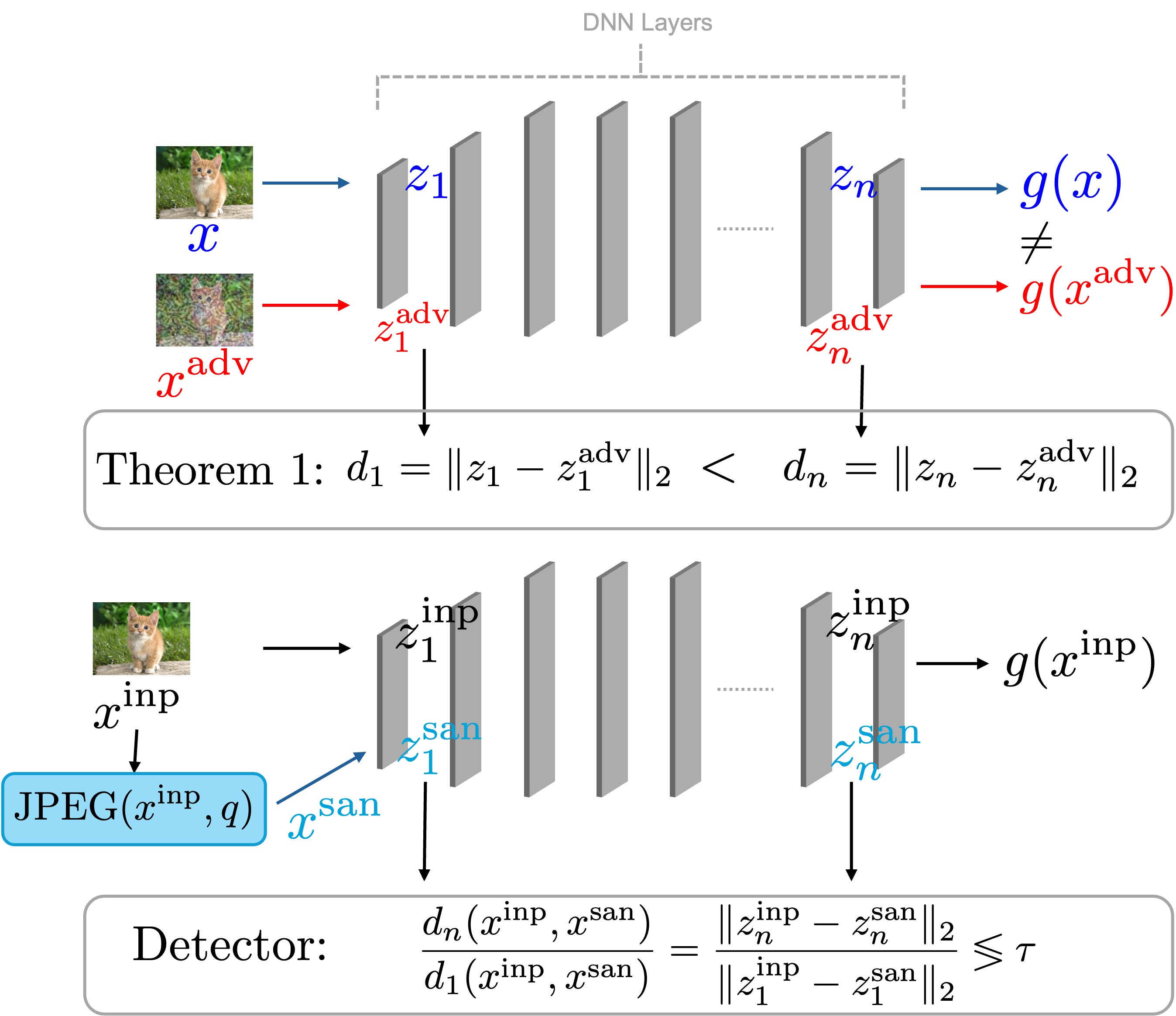}
  \caption{Adversarial noise amplification theorem (top) inspires the JPEG compression-based lightweight detector (bottom), which measures the amplification ratio with respect to the sanitized version $x^{\text{san}}$ of the input $x^{\text{inp}}$. If the ratio is greater than a predetermined threshold, then the input sample is labeled as adversarial and the predicted label $g(x^{\text{inp}})$ is ignored.}
    \label{fig:intro}
\end{figure}

Building upon this theoretical foundation, we also propose and evaluate a novel detection mechanism that directly leverages the amplification property (Figure \ref{fig:intro}).
Specifically, we operationalize it by combining our amplification analysis with the well-established defense technique of input sanitization. The proposed detector
measures the amplification signal by comparing an input image with
its sanitized version after applying a transformation designed to disrupt structured, high-frequency noise. To validate its efficacy, we benchmark the detector against a suite of state-of-the-art adversarial attacks. Importantly, we also evaluate it against a specially crafted adaptive attack aware of the detection method based on data sanitization and show that our detector still achieves a very high detection rate, confirming its practical value as a defense strategy. 

Importantly, we show that existing detection frameworks already achieve near-perfect detection performance against static attacks, a trend commonly observed among state-of-the-art methods. However, these approaches are typically evaluated only under standard threat models, leaving their robustness to adaptive attacks largely unexplored. We provide an extensive adaptive attack analysis for our framework, including direct comparisons with existing methods, and demonstrate that our detector maintains its performance even under worst-case adaptive attack scenarios. Our contributions can be summarized as follows:
\vspace{-3mm}
\begin{itemize}
    \item We formalize and provide a rigorous proof for the \emph{adversarial noise amplification theorem}, establishing a sufficient set of conditions under which the amplification of adversarial noise is guaranteed.\vspace{-2mm}
    \item We empirically demonstrate that the amplification phenomenon described by our theorem holds for existing, well-trained networks, and we propose 
    an architectural design and a new loss function 
    to construct a DNN that  satisfies the theorem's assumptions.\vspace{-2mm}
    \item We introduce a new, lightweight detection mechanism that 
    combines adversarial noise amplification with an input sanitization technique,
    providing a practical application for our theoretical findings.\vspace{-2mm}
    \item We conduct a thorough empirical evaluation, validating our detector's effectiveness against both state-of-the-art attacks and a purpose-built adaptive attack to demonstrate its robustness.
\end{itemize}

\section{Adversarial Noise Amplification}

\subsection{Preliminaries}

We first define the necessary components that will be used throughout the proof.

\textbf{Target Network Model:} We model an $n$-layer feed-forward neural network as a function $g: \mathbb{R}^d \to \mathbb{R}^c$, defined by the sequential function composition (denoted by $\circ$) of its layers: $g(x) = a_n \circ \dots \circ a_1(x)$. Each function $a_i$ represents the complete layer operation, which transforms an input $z$ from the preceding layer. This operation consists of two stages. First, an affine transformation is applied, which is parameterized by the layer's learnable weight matrix ($W_i$) and bias vector ($b_i$). Second, the result is passed through a pointwise, non-linear activation function ($f_i$). The complete layer operation is thus defined as $z_i=a_i(z_{i-1}) = f_i(W_i z_{i-1} + b_i)$, $z_0=x$. The linear transformation term, $W_i z_{i-1}$, serves as a general mathematical abstraction. This model is broadly applicable, encompassing the operations of fully-connected layers, the kernels in convolutional layers, and the individual weight matrices within the attention and MLP blocks of Transformers.

\textbf{Adversarial Data:} An adversarial 
    version
    $x^{\text{adv}}$ 
    of clean sample $x$
    is generated to 
    maximize the target network's loss function 
    $\mathcal{L}(g(x^{\text{adv}}), g(x))$ subject to the constraint $\|x^{\text{adv}} - x\|_{2} \le \epsilon$.

    
\textbf{Layer Impact:} The impact at layer $i$ is the change in its output vector, defined as $d_i(x^{\text{adv}},x) = \|a_i(\dots a_1(x^{\text{adv}})) - a_i(\dots a_1(x))\|_{2}=\|z_i^{\text{adv}}-z_i\|_2$.

\subsection{Theorem and the Proof}
In \cite{lr}, with a monotonicity assumption on the target loss function $\mathcal{L}(g(x^{\text{adv}}), g(x))$ with respect to the error norm $\|g(x^{\text{adv}})-g(x)\|$, it was conjectured and empirically shown that \emph{the impact of an adversarial sample is higher on the final layer output $d_n$ than the first layer output $d_1$}.

In the following theorem, we present sufficient conditions for the amplification of such adversarially designed noise throughout the target network. Later, in Section \ref{sec:model}, we leverage these sufficient conditions to enhance the amplification signal of adversarial noise to facilitate its detection. And in Section \ref{sec:detector}, we propose a practical adversarial sample detector based on this enhanced amplification signal. 

\begin{theorem}[Adversarial Noise Amplification Theorem] \label{thm:amplification}
An adversarial sample $x^{adv}$, which is generated to maximize $\mathcal{L}(g(x^{\text{adv}}), g(x))$, and in turn $\|g(x^{\text{adv}})-g(x)\|_2=\|a_n(z_{n-1}^{adv})-a_n(z_{n-1}))\|_2=d_n$ through the monotonicity assumption, impacts the final layer more than the first layer, i.e., $d_n>d_1$, under the following assumptions:
\begin{enumerate}
    \item 
    One-to-one Activation Function: Each activation function $f_i$ satisfies $\|f_i(u)-f_i(v)\|\not=0$ for $u\not=v$, i.e., there exists a constant $L_{f_i}>0$ such that $\|f_i(u) - f_i(v)\|_{2} \ge L_{f_i} \|u - v\|_{2}$. 
    
    \item
    Net Amplification Lower-Bound: The product of the minimum amplification factors for layers 2 through $n$ is
    greater than 1. This factor is determined by the
    activation function's expansion lower bound $L_{f_i}$
    and the smallest singular value 
    of the weight matrix $\sigma_{\min}(W_i)$:
    \[
    \beta = \prod_{i=2}^{n} L_{f_i} \sigma_{\min}(W_i) > 1
    \]
    
    \item {Non-Trivial Attack 
    :}
    Adversarial samples result
    in a nonzero impact at the first layer, i.e., $d_1 > 0$.
\end{enumerate}

\end{theorem}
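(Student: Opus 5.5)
The plan is a layer-by-layer lower bound on the impact, chained from layer $1$ to layer $n$ and closed off with assumptions 2 and 3.

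\textbf{Per-layer bound.} For each $i\ge 2$ we have $z_i^{\text{adv}}-z_i = f_i(W_i z_{i-1}^{\text{adv}}+b_i) - f_i(W_i z_{i-1}+b_i)$. Assumption 1 gives
\[
d_i \ge L_{f_i}\,\|W_i(z_{i-1}^{\text{adv}}-z_{i-1})\|_2 .
\]
The bias cancels inside the difference. Next we use $\|W_i v\|_2 \ge \sigma_{\min}(W_i)\|v\|_2$, which yields
\[
d_i \ge L_{f_i}\sigma_{\min}(W_i)\, d_{i-1}.
\]

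\textbf{Chaining.} Induct from $i=2$ up to $i=n$ to obtain $d_n \ge \beta\, d_1$. Assumption 2 gives $\beta>1$ and assumption 3 gives $d_1>0$, so $d_n \ge \beta d_1 > d_1$. This inequality holds for any pair $(x^{\text{adv}},x)$. The adversarial optimization and the monotonicity assumption only serve to identify $d_n$ with the output error being maximized; the argument never uses them.

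\textbf{Main obstacle.} The delicate step is the singular-value inequality. The bound $\|W_i v\|\ge\sigma_{\min}(W_i)\|v\|$ is valid only when $\sigma_{\min}$ means the smallest singular value counted over the input dimension. If $W_i$ is wide (more columns than rows), that quantity is $0$, so $\beta=0$ and assumption 2 fails. I would add a remark that assumption 2 therefore forces every $W_i$, $i\ge2$, to have full column rank, i.e. non-contracting layers. I would also check that assumption 1 makes sense as stated: pointwise $f_i$ with a slope lower bound, such as Leaky ReLU, satisfies the vector inequality with $L_{f_i}$ equal to the minimum slope. Any further technicalities would be stated as conventions rather than argued in detail.
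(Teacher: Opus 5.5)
Your proposal is correct and follows the paper's proof essentially step for step. Assumption 1 together with $\|W_i v\|_2 \ge \sigma_{\min}(W_i)\|v\|_2$ gives the recursion $d_i \ge L_{f_i}\sigma_{\min}(W_i)\,d_{i-1}$, which is chained to $d_n \ge \beta d_1$ and closed with Assumptions 2 and 3; your remarks that $\sigma_{\min}(W_i)$ must be read as $\min_{\|v\|_2=1}\|W_i v\|_2$ (so Assumption 2 forces full column rank for $i \ge 2$) and that the adversarial optimization is never used are accurate clarifications of points the paper leaves implicit.
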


\begin{proof}

The impact at layer $i$ is defined as
\[
    d_i = \|z_i^{\text{adv}} - z_i\|_{2}.
\]
Substituting the full definition of the layer operation $a_i$ for both the adversarial and clean inputs we get
\[
d_i = \|f_i(W_i z_{i-1}^{\text{adv}} + b_i) - f_i(W_i z_{i-1} + b_i)\|_{2}.
\]

From Assumption 1, 
\[
d_i \ge L_{f_i}  \|(W_i z_{i-1}^{\text{adv}} + b_i) - (W_i z_{i-1} + b_i)\|_{2},
\]

\[
d_i \ge L_{f_i}  \|W_i (z_{i-1}^{\text{adv}} - z_{i-1})\|_{2}.
\]
Since for a linear transformation $W_i$, the output vector's norm is lower-bounded by the smallest singular value $\sigma_{\min}(W_i)$ times the norm of the input vector: 
$\|W_i x\| \ge \sigma_{\min}(W_i) \|x\|$,
\[
d_i \ge L_{f_i} \cdot \sigma_{\min}(W_i) \cdot \|z_{i-1}^{\text{adv}} - z_{i-1}\|_{2}.
\]

\begin{equation} \label{eq:recursive_lower}
d_i \ge L_{f_i} \cdot \sigma_{\min}(W_i) \cdot d_{i-1} 
\end{equation}

Using the recursive inequality \eqref{eq:recursive_lower}  
we can relate the final impact $d_n$ to the initial impact $d_1$ by chaining the lower bounds. For the final layer $n$, we have
\[
d_n \ge L_{f_n} \cdot \sigma_{\min}(W_n) \cdot d_{n-1}.
\]
Substituting the bound for $d_{n-1}$ we get
\[
d_n \ge L_{f_n} \cdot \sigma_{\min}(W_n) \cdot L_{f_{n-1}} \cdot \sigma_{\min}(W_{n-1}) \cdot d_{n-2}.
\]
Continuing this process until we reach $d_1$, we obtain the final lower bound for $d_n$ in terms of $d_1$:
\begin{equation} \label{eq:chained_lower}
d_n \ge \left( \prod_{i=2}^{n} L_{f_i} \cdot \sigma_{\min}(W_i) \right) \cdot d_1. 
\end{equation}

From Assumption 2, 

$d_n \ge \beta \cdot d_1 > d_1.$

Assumption 3 completes the proof that

$d_n > d_1.$
\end{proof}

\begin{remark}
While our theorem is proven for the $\ell_2$-norm, the principle of norm equivalence provides a strong theoretical basis for expecting a similar amplification phenomenon in the $\ell_\infty$ case. The inequality $\|v\|_{2} \le \sqrt{dim} \|v\|_{\infty}$, where $dim$ is the vector dimension, ensures that any $\ell_\infty$ perturbation has a corresponding bounded $\ell_2$ counterpart. Our proof demonstrates that the target network is a guaranteed amplifier of this $\ell_2$ norm. Thus, our theorem describes the fundamental mechanism by which $\ell_\infty$ perturbations are also amplified. 
\end{remark}

\subsection{Discussion of the Assumptions}
\label{sec:discussion_of_assumptions}

Our theorem is built upon three assumptions that formalize the properties of the network's activations, its weights, and the nature of the attack. Here, we discuss the validity and implications of each premise.

\textit{One-to-one Activation Function.} This assumption requires that each activation function possesses an expansion lower-bound greater than zero ($L_{f_i} > 0$). 
The validity of this condition in practice depends on the specific activation function. For the {ReLU} function, this property does not hold in general 
as it outputs zero for all negative inputs.
In contrast, other activations like Leaky ReLU, tanh, and logistic sigmoid function satisfy this 
condition as their minimum slope is a small positive constant for finite input.

\textit{Net Amplification Lower-Bound.} 
With $L_{f_i}>0$ by choosing a suitable activation function, this is an assumption on the minimum singular values of weight matrices. While $\prod_{i=2}^n L_{f_i} \sigma_{\min}(W_i)>1$ is a strict assumption, this is a sufficient, not a necessary, condition. In practice, this condition implies that $\sigma_{\min}(W_i)$ should to be controlled to enhance adversarial noise amplification. Moreover, such singular value regularization is known to be useful for gradient conditioning and training stability \cite{jia2017improving}, improved robustness to adversarial noise \cite{senderovich2022towards}, and information preservation \cite{zaeemzadeh2020norm}, which are common features of well-trained and effective DNNs. 

\textit{Non-Trivial Attack Condition.} This is a mild, technical assumption that simply requires the initial impact of the attack to be non-zero ($d_1 > 0$). It serves to exclude the trivial case where the perturbation has no effect on the first layer's output. This is inherently true for any meaningful adversarial attack since the goal is to generate a perturbation that has a cascading effect through the network.

\section{Enhancing  Amplification Signal}
\label{sec:model}

Empirical analysis in Section \ref{sec:exp} demonstrates that the amplification phenomenon 
typically
arises in existing, well-trained networks 
with adversarial noise, $d_n/d_1>1$, but not with natural noise.  
In this section, using the sufficient conditions presented in Theorem \ref{thm:amplification}, we enhance the amplification signal $d_n/d_1$ of adversarial noise to facilitate the detection of adversarial samples. 

To satisfy Assumption 1, we adopt the Leaky ReLU activation function,
\begin{equation}
    f_i(x) =
    \begin{cases}
      x & \text{if } x \ge 0,\\
      \alpha x & \text{if } x < 0,
    \end{cases}
\end{equation}
where $\alpha$ is a small positive constant (e.g., 0.01). Because its minimum slope is $\alpha$, the expansion lower-bound constant is $L_{f_i}=\alpha>0$. 

Motivated by Assumption 2, we introduce a custom loss function that combines the standard classification loss $\mathcal{L}$ with a spectral regularization term,
which promotes larger $\sigma_{\min}(W_i)$ to obtain a net amplification greater than one:
\begin{equation}
    \mathcal{L}_{\text{total}} = \mathcal{L} - \lambda \sum_{i=2}^{n} \log(\sigma_{\min}(W_i)),
\end{equation}
The logarithm provides numerical stability, and the negative sign allows maximization of $\sigma_{\min}(W_i)$ within a minimization framework. The hyperparameter $\lambda$ balances the trade-off between classification accuracy and spectral amplification.

While this approach enforces the amplification property during training, the resulting guarantee is statistical rather than absolute.

\section{Detection via Amplification Signal}
\label{sec:detector}

The Adversarial Noise Amplification Theorem provides a theoretical foundation for using the net amplification ratio $d_n/d_1$ as a measurable signal of adversarial behavior. 
Note that since naturally occurring non-adversarial noise is not generated to maximize $d_n$, it is not necessarily amplified through the network layers. Experimental results provided in Table \ref{tab:res_random_noise} corroborates this statistical expectation: $d_n/d_1$ is much higher than $1$ for adversarial noise, but less than $1$ for natural noise. 
Building on this principle, we propose a practical detection mechanism called the JPEG-based Amplification Detector (JAD). 

Given the input sample $x^{\text{inp}}$, we obtain its sanitized version $x^{\text{san}}$ through JPEG compression to compute the amplification signal
\begin{align}
    \text{JAD\_score}(x^{\text{inp}}) &= \frac{d_n(x^{\text{inp}}, x^{\text{san}})}{d_1(x^{\text{inp}}, x^{\text{san}})}, \\
    x^{\text{san}} &= \mathrm{JPEG}(x^{\text{inp}}, q), \\
    d_i(x^{\text{inp}}, x^{\text{san}}) &= \| z_i^{\text{inp}} - z_i^{\text{san}} \|_{2},
\end{align}
where $q$ denotes the JPEG quality parameter. 
JAD measures how strongly the network amplifies the difference introduced by such sanitization.
An input is labeled as adversarial if its JAD score exceeds a detection threshold $\tau$, determined empirically from validation data.

Because JPEG selectively suppresses high-frequency components, this transformation attenuates structured adversarial noise while leaving benign content relatively stable. The resulting change $d_i(x^{\text{inp}}, x^{\text{san}}) = \| z_i^{\text{inp}} - z_i^{\text{san}} \|_{2}$ mimics $d_i = \|z_i^{\text{adv}} - z_i\|_{2}$ for adversarial inputs with a statistically distinct signature.
For clean samples, the effect of sanitization resembles random, unstructured compression artifacts, which the network suppresses or amplifies only weakly, resulting in a small JAD score. To further increase robustness against adaptive attacks, the JPEG quality parameter $q$ can be randomized within a predefined range during inference, 
as studied in Section \ref{sec:adaptive}. 

A crucial advantage of JAD is its simplicity and self-contained nature. It does not require auxiliary models, stored clean references, adversarial training samples, or layer-wise regression modules. Instead, it captures layer amplification by measuring how sanitization differentially affects early and late layer activations. The detector operates entirely at inference time, making it lightweight and easily deployable.

\section{Empirical Analysis and Experiments}
\label{sec:exp}

In this section, we present an extensive empirical analysis to validate our theoretical framework, training methodology, and the proposed JAD detector, with additional implementation details provided in Appendix \ref{app:code}. Our evaluation includes: (i) verifying that the amplification phenomenon predicted by our theorem naturally appears in standard deep networks, (ii) demonstrating that our training method strengthens this amplification while preserving accuracy, (iii) benchmarking JAD against state-of-the-art detection methods across multiple attacks, and (iv) assessing its robustness under a tailored adaptive attack that explicitly accounts for the defense.

\subsection{Amplification in Existing DNNs}
\label{sec:amp_existing}

\begin{table}[t]
\caption{Amplification results for existing DNNs. Each cell shows the Success Rate (Average Amplification, $d_n/d_1$) for a given model and attack.}
\small
\centering
\resizebox{\columnwidth}{!}{%
\begin{tabular}{l|ccccc}
\hline
\textbf{Model} & \textbf{PGD} & \textbf{BIM} & \textbf{VNI} & \textbf{VMI} & \textbf{PGD-{$\ell_2$}} \\ \hline
ResNet-50 & 1.0 (48.59) & 1.0 (47.94) & 1.0 (31.14) & 1.0 (36.26) & 1.0 (88.69) \\
Inception-v3 & 1.0 (294.60) & 1.0 (297.67) & 1.0 (259.20) & 1.0 (321.41) & 1.0 (596.69) \\
ViT & 1.0 (102.32) & 1.0 (95.01) & 1.0 (55.14) & 1.0 (61.54) & 1.0 (187.50) \\
DeiT & 1.0 (49.18) & 1.0 (45.63) & 1.0 (28.91) & 1.0 (32.83) & 1.0 (135.96) \\ 
\hline
\end{tabular}}

\label{tab:res_1}
\end{table}

\begin{figure}[t]
    \centering
    \includegraphics[width=0.65\linewidth]{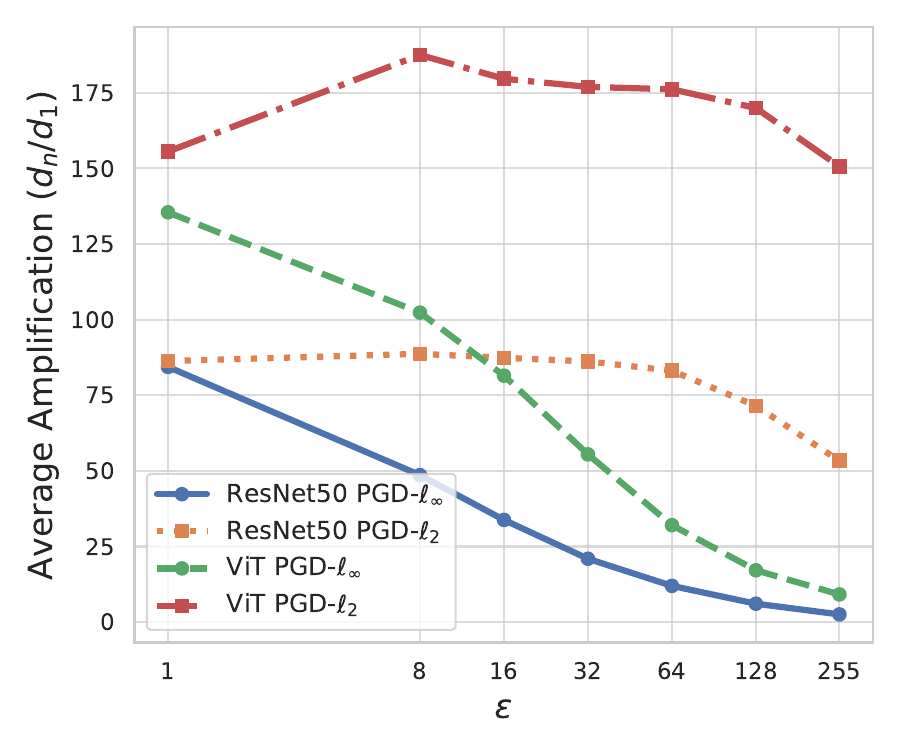}
  \caption{Amplification vs. adversarial noise budget $\epsilon$ for different models and attacks.}
    \label{fig:eps_vs_amp}
\end{figure}

As discussed in Section~\ref{sec:discussion_of_assumptions}, well-trained DNNs are expected to inherently exhibit the amplification behavior described by our theorem. To verify this, we evaluate four widely used architectures from the {timm} library \cite{timm}, namely ResNet-50 \cite{resnet}, Inception-v3 \cite{inception}, ViT \cite{vit}, and DeiT \cite{deit}, on 1,000 randomly selected images from the ImageNet \cite{imagenet} validation set. For each model, we compute the net amplification ratio ($d_n/d_1$) under strong gradient-based attacks that successfully induce misclassification, including PGD~\cite{pgd}, BIM~\cite{bim}, VMI~\cite{vmi_vni}, and VNI~\cite{vmi_vni}, using the default configurations recommended in their respective works. 

Table~\ref{tab:res_1} summarizes the results for each model and attack. For every combination, we report two complementary measures: the \textit{Success Rate}, which denotes the proportion of test images where the amplification ratio $d_n/d_1$ exceeds one, and the \textit{Average Amplification}, representing the mean value of this ratio across all samples.

The results reveal a consistent pattern. Across all models and attacks, 100\% of adversarial examples exhibit $d_n/d_1 > 1$, confirming that standard networks already amplify adversarial perturbations. The amplification magnitudes are substantial, often ranging from tens to hundreds, demonstrating that the effect is both pervasive and strong. Variations across architectures are observed, with Inception-v3 showing the highest average amplification. Moreover, PGD constrained by the $\ell_2$ norm consistently yields the strongest amplification, aligning with the theoretical $\ell_2$ basis of our formulation.

To further explore how this behavior evolves with attack strength, we vary the perturbation budget $\epsilon$ and measure the resulting amplification. Figure~\ref{fig:eps_vs_amp} shows the average ratio for ResNet-50 and ViT under PGD $\ell_\infty$ and $\ell_2$ attacks. As $\epsilon$ increases, the amplification ratio ($d_n/d_1$) decreases, most notably for $\ell_\infty$ attacks. This occurs because strong perturbations already induce large changes in the early layers, leaving less residual signal to be amplified by deeper layers. At such high perturbation levels, the images become heavily distorted and no longer represent subtle adversarial examples, which are the primary target of an attacker. In contrast, small and structured perturbations produce limited but coherent feature-level deviations that propagate and magnify through the network, resulting in stronger amplification. The relatively stable trend for $\ell_2$ attacks further indicates that the network’s inherent amplification dynamics align closely with the $\ell_2$ geometry underlying our theoretical framework.


\subsection{Performance of Proposed Network with Enhanced Amplification}
\label{sec:5.2}

We train a DNN with the proposed activation function and loss function described in Section~\ref{sec:model} to empirically validate our methodology. Two versions of the WideResNet \cite{wresnet} architecture are trained on CIFAR-10 \cite{cifar} for 50~epochs: a \textit{Default} configuration using standard training and an \textit{Ours} configuration incorporating our amplification-enforcing design. To ensure statistical robustness, each experiment is repeated 100~times with independent initializations.

Our goal is to confirm that the proposed modifications strengthen the amplification property without degrading classification accuracy. Evaluated on the CIFAR-10 test set, the proposed model achieved an accuracy of $87.21\% \pm 0.05$, compared to $86.67\% \pm 0.05$ for the default model, indicating that our approach preserves standard predictive performance.

We then analyze the amplification characteristics of both networks under the five adversarial attacks, with the same attack configurations as in Section \ref{sec:amp_existing}, on the CIFAR-10 test set. The results in Table~\ref{tab:res_ours_vs_default} show that while both models consistently exhibit the amplification phenomenon (Success Rate~$=1.0$), our training substantially increases the magnitude of the amplification ratio. Across all attacks, the average amplification achieved by our model is notably higher than that of the default configuration. These results verify that the proposed spectral regularization and activation function effectively enforce the theoretical conditions of our theorem, yielding a network with a stronger amplification signal.

\begin{table}[t]
\caption{Amplification comparison between WideResNet models trained with our proposed structure and loss function versus the default configuration. Each cell reports \textit{Success Rate (Average Amplification)} for the amplification ratio $d_n/d_1$.}
\scriptsize
\centering
\resizebox{\columnwidth}{!}{%
\begin{tabular}{l|ccccc}
\hline
\textbf{Method} & \textbf{PGD} & \textbf{BIM} & \textbf{VNI} & \textbf{VMI} & \textbf{PGD\textsubscript{$\ell_2$}} \\ \hline
Ours    & 1.0 (99.88) & 1.0 (101.12) & 1.0 (79.97) & 1.0 (84.52) & 1.0 (93.78) \\
Default & 1.0 (84.99) & 1.0 (87.74)  & 1.0 (68.77) & 1.0 (73.48) & 1.0 (86.97) \\ \hline
\end{tabular}}

\label{tab:res_ours_vs_default}
\end{table}

\subsection{Specificity of the Amplification Phenomenon}
\label{app:specificity}

\begin{table}[t]
\caption{Comparison of amplification results for adversarial noise and non-adversarial image corruptions on our trained WideResNet. {Success Rate} denotes the proportion of test images for which $d_n/d_1 > 1$.}
\footnotesize
\centering
\begin{tabular}{l|cc}
\hline
\textbf{Corruption Type} & \makecell{\textbf{Success} \\ \textbf{Rate}} & \makecell{\textbf{Avg.} \\ \textbf{Amplification}} \\ \hline
PGD ($\ell_\infty$)    & 1.000                 & 99.88                       \\
PGD $\ell_2$              & 1.000                 & 93.78                       \\ \hline \hline
$\ell_\infty$ Uniform  & 0.076                 & 0.73                        \\
$\ell_2$ Gaussian         & 0.028                 & 0.53                        \\
Salt-and-Pepper     & 0.000                 & 0.36                        \\
Gaussian Blur       & 0.083                 & 0.75                        \\
JPEG Compression    & 0.072                 & 0.58                        \\
Laplacian Noise     & 0.000                 & 0.42                        \\ \hline
\end{tabular}

\label{tab:res_random_noise}
\end{table}

We next perform a set of control experiments with our trained WideResNet model on the CIFAR-10 The non-adversarial corruptions are selected to span diverse structural characteristics. To ensure a fair comparison, we first include $\ell_\infty$ uniform and $\ell_2$ Gaussian noise with magnitudes exactly matched to their respective PGD attacks. We then extend the analysis to a broader set of common corruptions including Salt-and-Pepper noise, Laplacian noise, Gaussian Blur, and JPEG compression, representing impulsive, heavy-tailed, low-frequency, and lossy transformations, respectively. This setup allows us to test whether amplification arises exclusively from the structured, optimized nature of adversarial perturbations.

The results in Table~\ref{tab:res_random_noise} strongly validate our hypothesis. The contrast between adversarial and non-adversarial perturbations is striking. A PGD $\ell_\infty$ attack yields a Success Rate of 1.0 and an Average Amplification of 99.88, whereas random uniform noise of the same $\ell_\infty$ magnitude produces only 0.076 and 0.73, respectively. The same pattern holds for the $\ell_2$ case. All other non-adversarial corruptions from impulsive Salt-and-Pepper noise to low-frequency Gaussian Blur, lead to net suppression rather than amplification. These results confirm that the amplification phenomenon is a specific and reliable signature of structured adversarial perturbations, not a general reaction to image degradation.

\subsection{Performance of JAD}
\label{sec:5.4}

\begin{table}[t]
\caption{Comparison of AUROC scores across different attacks for WideResNet using default (-D) and our (-O) configurations.}
\scriptsize
\centering
\begin{tabular}{l|ccccccc}
\hline
\textbf{Model} & \textbf{PGD} & \textbf{BIM} & \textbf{VNI} & \textbf{VMI} & \textbf{PGD-{$\ell_2$}} & \textbf{AA} & \textbf{Diff} \\ \hline
-D & 0.99 & 0.99 & 0.98 & 0.97 & 0.99 & 0.99 & 0.99 \\
-O & 0.99 & 0.99 & 0.99 & 0.98 & 0.99  & 0.99 & 0.99 \\ \hline
\end{tabular}

\label{tab:wideresnet_comparison}
\end{table}

\begin{table}[t]
\caption{Comparison of AUROC scores across different attacks for various detectors.}
\scriptsize
\centering
\begin{tabular}{l|ccccccc}
\hline
\textbf{Method} & \textbf{PGD} & \textbf{BIM} & \textbf{VNI} & \textbf{VMI} & \textbf{PGD-$\ell_2$} & \textbf{AA} & \textbf{Diff} \\ \hline
DkNN   & 0.86 & 0.84 & 0.80 & 0.79 & 0.85 & 0.80 & 0.81 \\
VLAD   & 0.83 & 0.85 & 0.89 & 0.90 & 0.81 & 0.75 & 0.80 \\
EPS-AD & 0.99 & 0.99 & 0.99 & 0.95 & 0.96 & 0.96 & 0.95 \\
LR     & 0.99 & 0.99 & 0.99 & 0.99 & 0.99 & 0.99 & 0.97 \\
JAD    & 0.99 & 0.99 & 0.99 & 0.98 & 0.99 & 0.99 & 0.98 \\ \hline\end{tabular}

\label{tab:methods_comparison}
\end{table}

In this section, we evaluate the performance of our proposed JPEG-based Amplification Detector (JAD). We use the area under the receiver operating characteristic (AUROC) curve as our evaluation metric, a standard measure of a detector’s discriminative capability. For evaluation, we construct a balanced test set consisting of 1,000 clean images from the CIFAR-10 test set and 1,000 corresponding adversarial examples that successfully induce misclassification. We compute AUROC per attack, each using 2,000 samples (1,000 clean + 1,000 adversarial), and repeat the process for five attack types: PGD \cite{pgd}, PGD-{$\ell_2$}, BIM \cite{bim}, VNI \cite{vmi_vni}, and VMI \cite{vmi_vni}, Diff-PGD \cite{diff}, and AutoAttack (AA) \cite{aa}. Our version of WideResNet from Section \ref{sec:5.2} used as target model.

Our first experiment demonstrates the benefit of using a network specifically trained to amplify adversarial noise. We evaluate JAD on both the standard WideResNet (Default) and our version trained with the proposed amplification-oriented methodology (Ours). The AUROC results, shown in Table~\ref{tab:wideresnet_comparison}, indicate that JAD achieves near-perfect detection across all attacks. Our configuration provides a slight but consistent improvement, particularly against momentum-based attacks (VNI and VMI), as the network is trained to produce a more stable and pronounced amplification signal, offering a stronger foundation for detection.

Next, we compare JAD with several state-of-the-art detectors, including Deep k-Nearest Neighbors (DkNN) \cite{dknn}, VLAD \cite{vlad}, EPS-AD \cite{eps-ad}, and Layer Regression (LR) \cite{lr}, all evaluated using our trained WideResNet model. As shown in Table~\ref{tab:methods_comparison}, JAD achieves near-perfect AUROC scores across all attack types, performing on par with EPS-AD and LR. DkNN and VLAD, while effective, show lower overall performance, particularly under momentum-based attacks, indicating reduced generalization to different perturbation dynamics. These results confirm that JAD provides a strong and consistent baseline under standard adversarial conditions. In Appendix \ref{app:jpeg_standalone}, in addition to comparing against state-of-the-art detectors, we also perform an ablation study examining JPEG sanitization as a standalone detector, showing that it performs poorly without JAD’s amplification component. In Appendix~\ref{app:imagenet}, we present additional results of the JAD detector on robust models and the ImageNet dataset, demonstrating consistent performance.

\subsection{JAD Performance against Adaptive Attack}
\label{sec:adaptive}

\begin{table}[t]
\caption{AUROC performance of JAD and Attack SR against an adaptive attack with varying $\epsilon$ values.}
\small
\centering
\resizebox{\columnwidth}{!}{%
\begin{tabular}{l|cccccc}
\hline
 & \textbf{$\epsilon$=4} & \textbf{$\epsilon$=8} & \textbf{$\epsilon$=16} & \textbf{$\epsilon$=32} & \textbf{$\epsilon$=64} & \textbf{$\epsilon$=128} \\ \hline
JAD AUROC   & 0.96 & 0.99 & 0.99 & 0.98 & 0.93 & 0.91 \\ 
ASR         & 0.95 & 1.0  & 1.0  & 1.0  & 1.0 & 1.0 \\ 
\hline
\end{tabular}}

\label{tab:adaptive_results}
\end{table}

To evaluate JAD under a fully informed adversary, we extend the standard 
PGD framework using the same experimental setup as in Section \ref{sec:5.4}, including the clean dataset, model, and evaluation protocol. Alongside detection AUROC, we also report the attack success rate (ASR), defined as the proportion of adversarial examples that successfully change the classifier’s prediction.
A conventional PGD attack seeks an adversarial example \(x^{\text{adv}}\) that maximizes the classifier loss subject to an \(\ell_\infty\) constraint.

However, a true adaptive adversary must also account for our detector. We first develop an EOT based formulation \cite{athalye2018synthesizing} adaptive attack in which the adversary tries to  maximize the classifier loss both before and after sanitization, aiming to suppress the amplification ratio $d_n/d_1$ that JAD relies on for detection.

\begin{equation}
\begin{aligned}
\max_{x^{\text{adv}}}\quad & (1-\lambda)\,\mathcal{L}_{\mathrm{CE}}(g(x^{\text{adv}}), y) \\
&\quad + \lambda\,\mathbb{E}_{q\sim U(Q)}\big[\mathcal{L}_{\mathrm{CE}}(g(\mathrm{JPEG}(x^{\text{adv}},q)), y)\big] \\
\text{s.t.}\quad & \|x^{\text{adv}}-x\|_{\infty}\le\epsilon,
\end{aligned}
\end{equation}
where \(\lambda\in[0,1]\) controls the trade-off between fooling the classifier and the detector, and $U(Q)$ denotes the uniform distribution over the range $Q$ of JPEG quality $q$ values. In practice, the expectation is approximated by averaging over \(T\) JPEG quality samples per PGD step. The attacker can tune three levers to strengthen the attack, the perturbation budget \(\epsilon\), the number of trials \(T\), and the trade off parameter \(\lambda\). Next we investigate the effect of these parameters.

Table~\ref{tab:adaptive_results} summarizes how JAD’s AUROC varies with the perturbation strength $\epsilon$ under the adaptive attack configured with $T=1$ and $\lambda=0.5$, a balanced setting in which the adversary attempts to fool both the classifier and the detector. JAD already performs strongly at small budgets ($\epsilon=4$, AUROC 0.96) and reaches peak accuracy for moderate perturbation levels ($\epsilon=8$ to $\epsilon=32$), where the adversarial noise remains structured and fully amplifiable, yielding AUROCs of 0.98--0.99. At very large budgets ($\epsilon=64,128$), AUROC decreases modestly to 0.93 and 0.91. As discussed in Section~\ref{sec:amp_existing}, such large perturbations create substantial early-layer impact and heavily distort the input, reducing the amplification ratio that JAD relies on. Even so, JAD maintains strong separability well beyond realistic adversarial threat levels. Finally, varying the number of JPEG trials $T$ from 1 to 20 produced only minor fluctuations (within approximately one percentage point), indicating that increasing $T$ has little influence on the success of the adaptive attack while significantly increasing the computations.

\begin{figure}[t]
    \centering
    \includegraphics[width=0.65\linewidth]{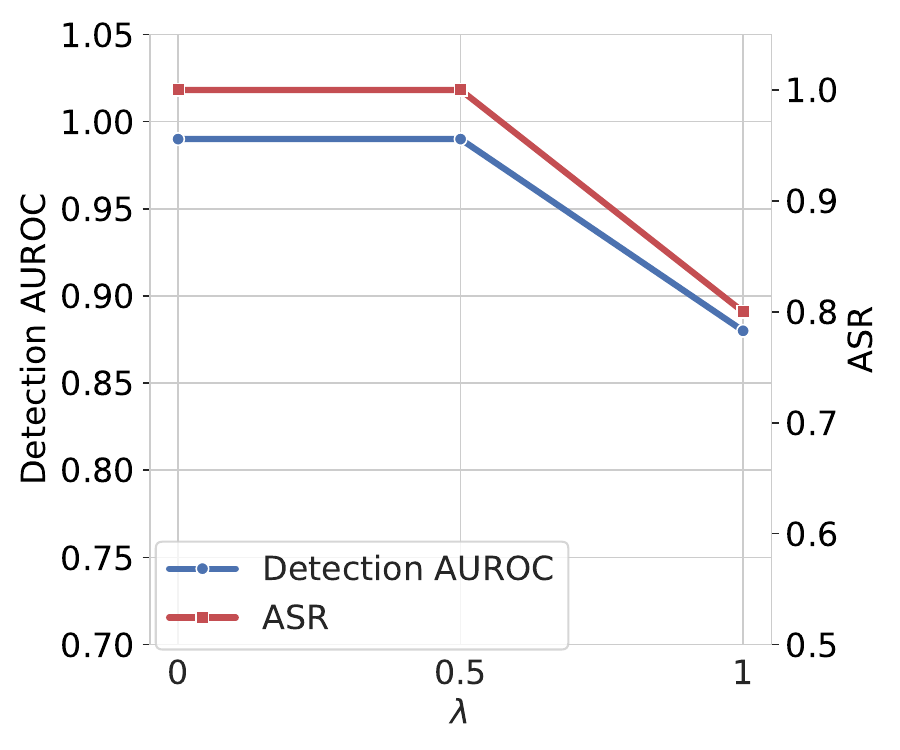}
  \caption{Trade-off between attack success rate and JAD’s detection AUROC as the adaptive attack shifts its objective via $\lambda$.}
    \label{fig:adaptive_params}
\end{figure}

Figure~\ref{fig:adaptive_params} shows how the adaptive attack behaves as the trade off parameter $\lambda$ varies from 0 to 1. When $\lambda=0$, the attacker focuses solely on fooling the classifier. This yields a high attack success rate, but the resulting perturbations retain strong amplifiable structure, allowing JAD to detect them with near perfect accuracy. Conversely, when $\lambda=1$, the attacker concentrates entirely on evading detection. While this reduces the detector’s AUROC, it also sharply lowers the attack success rate because the perturbation is no longer effective at misleading the classifier. These two extremes demonstrate the core limitation faced by adaptive adversaries: improving one objective necessarily harms the other. JAD’s randomized inference and amplification based signal force this incompatibility, preventing fully informed attackers from jointly achieving high attack success and low detectability.

We further design a classical PGD-based targeted adaptive attack, following a similar approach to existing works \citep{lr, contra} with the objective function
\begin{equation}
    \max_{x^{\text{adv}}} \mathcal{L}_{\text{Classifier}} - \lambda \cdot \mathcal{L}_{\text{Detector}}
\end{equation}
where $\mathcal{L}_{\text{Classifier}}$ and $\mathcal{L}_{\text{Detector}}$ denote the loss functions of the target model and JAD score ($d_n/d_1$), respectively, while $\lambda$ balances the two components. To provide a comparison, we extend the adaptive PGD framework to other state-of-the-art detectors, including DkNN, VLAD, EPS-AD, and LR. For VLAD, EPS-AD, and LR, the detector-specific loss $\mathcal{L}_{\text{Detector}}$ corresponds to the differentiable loss function of each method's detection model. For DkNN, we adopt the adaptive attack formulation proposed by its authors, defining $\mathcal{L}_{\text{Detector}}$ as the $\ell_2$ distance between the internal representation of the adversarial input at layer 1 and that of a target training sample. All adaptive attacks are performed with 200 PGD steps, $\lambda = 1$, and $\epsilon = 8$, consistent with the experimental settings reported in existing works \cite{lr, adaptive_setting}.

\begin{table}[t]
\caption{Comparison of detector performance against default PGD versus the unified adaptive attack. Where JAD-1 and JAD-2 reports the EOT-based and classical adaptive attacks against our method.}
\small
\centering
\begin{tabular}{l|c|c}
\hline
\textbf{Method} & \makecell{\textbf{AUROC} \\ \textbf{(Default $\rightarrow$ Adaptive)}} & \makecell{\textbf{Performance} \\ \textbf{Drop}} \\ \hline
DkNN   & 0.86 $\rightarrow$ 0.15 & 0.71 \\
VLAD   & 0.83 $\rightarrow$ 0.10 & 0.73 \\
EPS-AD & 0.99 $\rightarrow$ 0.36 & 0.63 \\
LR     & 0.99 $\rightarrow$ 0.45 & 0.54 \\ 
\hline
JAD-1    & 0.99 $\rightarrow$ 0.98 & 0.01 \\ 
JAD-2   & 0.99 $\rightarrow$ 0.99 & 0.00 \\ \hline
\end{tabular}

\label{tab:adaptive_comparison}
\end{table}

As shown in Table~\ref{tab:adaptive_comparison}, all competing methods experience a dramatic reduction in AUROC under the adaptive attack, confirming their vulnerability when the adversary has full knowledge of the detection mechanism. VLAD’s performance collapses from 0.83 to 0.10 (a drop of 0.73), EPS-AD falls from 0.99 to 0.36 (a drop of 0.63), and Layer Regression declines from 0.99 to 0.45 (a drop of 0.54). In contrast, JAD maintains AUROC scores of 0.98 and 0.99, exhibiting only a marginal drop of 0.01 under EOT-based attack and no degradation under classical adaptive attack, highlighting its robustness against fully informed adversaries. This robustness arises from two factors. First, JAD’s inference-time randomization prevents the attacker from obtaining stable gradients, forcing reliance on noisy repeated evaluations. Second, the amplification-based signal creates a fundamental conflict: perturbations that effectively fool the classifier inevitably become more detectable after sanitization. Together, these constraints make it practically impossible for an adaptive adversary to simultaneously maintain high attack success and low detectability.

\vspace{-3mm}
\section{Related Work and Discussions}
\vspace{-1mm}
\textbf{Adversarial Attacks.}
Adversarial examples are generated by optimizing small perturbations that maximize the target model’s loss. White box attacks such as FGSM \cite{fgsm}, PGD \cite{fgsm}, and APGD \cite{apgd} directly exploit gradient information to craft effective perturbations. BIM \cite{bim} improves upon these gradient based methods through iterative updates. Beyond white box access, transferability based black box attacks \cite{vmi_vni, mumcu2024sequential} generate adversarial examples using substitute models and exploit the cross model generalization of perturbations. 
AutoAttack \cite{aa} combine multiple attack types to comprehensively evaluate model robustness under diverse conditions.

\vspace{-1mm}

\textbf{Adversarial Detection Methods.}
Early detection methods focused on input level cues, such as feature squeezing \cite{fs}, bit depth reduction, or spatial smoothing \cite{denoise}. Subsequent approaches explored semantic or reconstruction based detection \cite{deflection}, leveraging inconsistencies in feature representations or reconstruction fidelity to distinguish adversarial examples. Consistency based detectors \cite{vlad, vilas} instead rely on prediction stability under input transformations.
Of particular relevance to our framework are the methods that explicitly exploit the nonuniform impact of adversarial perturbations across network layers. Deep kNN \cite{dknn} performs nearest neighbor classification on intermediate features to identify outlier activations, though it requires high memory and computational cost. Nearest Neighbors Influence Functions (NNIF) \cite{nnif} extend this concept by combining layer-wise influence estimation with k-nearest neighbor analysis in the feature space to measure how each training sample affects model predictions. While effective, NNIF relies on precomputed adversarial examples and incurs prohibitive computational cost, with the authors explicitly noting that it is unsuitable for real-time applications. Layer Regression (LR) \cite{lr} adopts a lightweight design by training a multilayer perceptron (MLP) to learn the regression consistency between adjacent layers, thereby quantifying representational stability and revealing how adversarial noise perturbs internal activations unevenly. These methods collectively demonstrate that adversarial examples introduce structured disruptions within deep networks, a phenomenon central to our theoretical formulation. In contrast, our proposed JAD detector captures this same asymmetry implicitly through the network’s statistical response to JPEG-based sanitization, without requiring auxiliary models, stored references, adversarial training samples, or layer-wise computations.

\vspace{-1mm}
\textbf{Singular Value Regularization.}
Singular value regularization has been well studied, mostly from the perspective of limiting the spectral norm or Lipschitz constant, which corresponds to minimizing the largest singular values of weight matrices \cite{tsuzuku2018lipschitz,farnia2018generalizable,wu2024lrs}. There are also several studies which propose to regularize all singular values within a narrow band to obtain small condition numbers close to one \cite{cisse2017parseval,jia2017svb,senderovich2022practical,xiao2018dynamical,jere2020singular}. The existing works discuss singular value regularization in terms of training stability, generalization performance, and improved robustness to adversarial samples. We are not aware of any existing work that regularizes singular values of weight matrices for adversarial sample detection. 

\vspace{-3mm}
\section{Conclusion}
\vspace{-2mm}
We established a formal adversarial noise amplification theorem that provides the first rigorous explanation for a phenomenon widely observed in deep neural networks. Building on this theoretical basis, we designed a network architecture and a spectral regularization strategy that consistently strengthen amplification, creating a clear separation between adversarial and natural perturbations. Using this property, we introduced JAD, a simple and lightweight detector that requires no auxiliary models or adversarial training and operates entirely at inference time. Extensive experiments show that JAD maintains exceptional robustness even against fully adaptive adversaries, highlighting amplification as a powerful and practical signal for adversarial defense.

\setlength{\bibsep}{5.5pt}
\bibliography{main}
\bibliographystyle{plainnat}

\newpage
\appendix
\onecolumn

\clearpage

\section{Additional Implementation Details}
\label{app:implement}

In Sections~5.1, 5.2, and~5.4, all adversarial attacks are evaluated using the configurations originally proposed in their respective papers. We implement these attacks using the {Torchattacks} library.

For experiments involving {WideResNet}, we use a network depth of 14. In our configuration, we replace all ReLU activations with LeakyReLU and apply the loss function introduced in Section~3, setting $\lambda = 10^{-2}$ while keeping all other settings identical to the default implementation. For both models, we use the Adam optimizer with a learning rate of $10^{-2}$.

\section{Additional Experiments with JAD}
\label{app:imagenet}
\begin{table}[t]
\centering
\setlength{\tabcolsep}{4pt}
\begin{tabular}{l|ccccccc}
\hline
\textbf{Method} & \textbf{PGD} & \textbf{BIM} & \textbf{VNI} & \textbf{VMI} & \textbf{PGD-$\ell_2$} & \textbf{AA} & \textbf{Diff} \\ \hline
ResNet-50     & 0.99 & 0.99 & 0.99 & 0.99 & 0.99 & 0.99 & 0.99 \\
ViT           & 0.99 & 0.99 & 0.99 & 0.99 & 0.99 & 0.98 & 0.98 \\ \hline
Robust Model 1 & 0.97 & 0.97 & 0.94 & 0.95 & 0.98 & 0.95 & 0.96 \\
Robust Model 2 & 0.98 & 0.98 & 0.97 & 0.98 & 0.98 & 0.96 & 0.96 \\
Robust Model 3 & 0.97 & 0.97 & 0.95 & 0.96 & 0.97 & 0.94 & 0.96 \\ \hline
\end{tabular}
\caption{Performance of JAD across standard (ResNet-50, ViT) and robust models under multiple attacks.}
\label{tab:jad_combined_models}
\vspace{-3mm}
\end{table}

In this section, we extend our evaluation to the ImageNet validation set using 10,000 randomly selected images under the same experimental protocol as our main results. We consider both standard architectures, ResNet-50 and ViT, as well as three representative robust models drawn from the RobustBench benchmark~\cite{rbench}, denoted as Robust Model~1~\cite{robust1}, Robust Model~2~\cite{robust2}, and Robust Model~3~\cite{robust3}. The standard ResNet-50 and ViT models serve as reference points for comparison. As shown in Table~\ref{tab:jad_combined_models}, the proposed JAD detector maintains consistently high AUROC across all attacks on both standard and robust models, indicating that the underlying amplification signal persists even under robustness-enhancing training procedures.

Robustness has been extensively studied with the goal of reducing model sensitivity to adversarial perturbations \cite{robustsurvey1, robustsurvey2, mumcu2026robustness}, which may initially appear at odds with our amplification-based perspective. However, our formulation does not contradict robustness. In particular, Assumption~1 does not require maximizing the lower-bound constant $L$ or enforcing large amplification. It only requires a strictly positive lower bound, i.e., $L>0$. Therefore, even a very small $L$, as in common activation functions such as Leaky ReLU, is sufficient to satisfy the assumption. Moreover, an activation function can be both Lipschitz smooth and satisfy our assumption. Specifically, a function $f_i$ can be lower and upper bounded as
\[
L_1\|u-v\|_2 \leq \|f_i(u)-f_i(v)\|_2 \leq L_2\|u-v\|_2,
\qquad L_2 > L_1 > 0.
\]
Thus, Assumption~1 does not imply that $f_i$ is non-Lipschitz, nor does it enforce non-smoothness. Instead, it imposes a mild nonzero lower-bound condition that can coexist with robustness-oriented constraints.

Together, these results highlight the generality of our approach and demonstrate its effectiveness beyond standard, non-robust settings.

\section{Performance of standalone JPEG as a detector}

\label{app:jpeg_standalone}

\begin{table}[t]
\centering
\begin{tabular}{l|ccccc}
\hline
\textbf{Method} & \textbf{PGD} & \textbf{BIM} & \textbf{VNI} & \textbf{VMI} & \textbf{PGD-{$\ell_2$}} \\ \hline
JPEG   & 0.60 & 0.59 & 0.55 & 0.54 & 0.61 \\
JAD    & 0.99 & 0.99 & 0.99 & 0.98 & 0.99 \\ \hline
\end{tabular}
\caption{Comparison of AUROC scores between standalone JPEG and JAD.}
\label{tab:jpeg}
\end{table}

We evaluate JPEG sanitization as a standalone detector by comparing the model’s predictions before and after JPEG compression. If the predictions remain consistent, we classify the input as benign; a change in prediction indicates an adversarial example. Using the same experimental settings described in Section~5.4, we construct a balanced evaluation set consisting of 1,000 clean CIFAR-10 test images and 1,000 corresponding adversarial examples that successfully induce misclassification. The resulting AUROC scores are reported in Table~\ref{tab:jpeg}.

As shown in Table~\ref{tab:jpeg}, standalone JPEG-based detection performs poorly across all attacks, with AUROC values around 0.55--0.61. In contrast, our proposed JAD method achieves near-perfect detection performance (0.98--0.99) under all attack settings. These results highlight that simple input sanitization is insufficient for reliable adversarial detection, while JAD provides a substantially more robust and consistent detection signal.

\section{Adaptive Attack Trial Number Analysis}
\label{app:adaptive}

We further analyze the effect of increasing the number of JPEG trials $T$ in the adaptive attack. In this setting, we fix the attack parameters to $\epsilon = 8$, step size $=200$, and the adaptive weight $\lambda = 0.5$. Because each trial requires a full PGD pass, the total attack complexity scales linearly as
\[
\text{Complexity} = T \times \text{PGD step size} = 200T.
\]

As shown in Figure~\ref{fig:ablation_trial}, varying $T$ from 1 to 20 results in only negligible changes in detection performance. The AUROC remains consistently high fluctuating by roughly one percentage point, indicating that increasing $T$ offers almost no benefit to the adversary in terms of bypassing JAD.

In contrast, the attack complexity increases sharply and linearly with $T$, exceeding 4{,}000 gradient steps when $T=20$. This demonstrates that while larger $T$ values dramatically increase computational cost, they do not meaningfully improve the adaptive attack’s effectiveness.

\begin{figure}[t]
    \centering
    \includegraphics[width=0.5\linewidth]{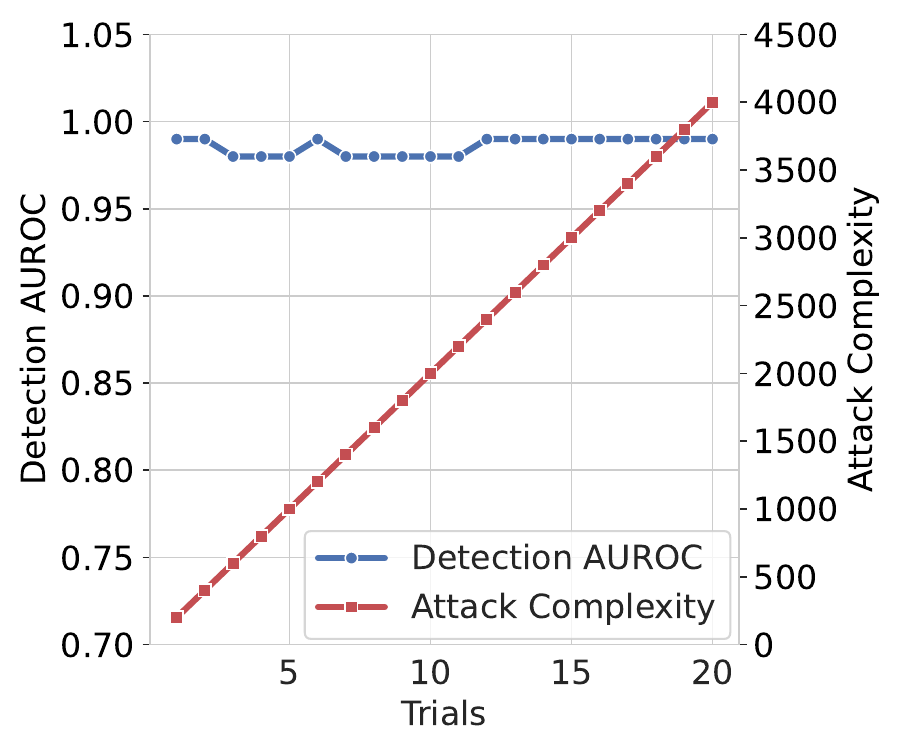}
    \caption{Detection AUROC and attack complexity as the number of JPEG trials $T$ increases from 1 to 20. AUROC remains highly stable with only minor fluctuations, while attack complexity increases linearly with $T$.}
    \label{fig:ablation_trial}
\end{figure}

\section{Broader Impact Statement}

This paper presents theoretical and empirical work aimed at advancing the robustness and reliability of machine learning models in the presence of adversarial perturbations. By formalizing adversarial noise amplification and proposing a lightweight detection mechanism, the work contributes to the development of more trustworthy systems, particularly in settings where adversarial manipulation may pose risks.

The proposed methods are defensive in nature and focus on detecting adversarial inputs at inference time. While improved understanding of adversarial behavior can inform both attacks and defenses, this work does not introduce new mechanisms for generating adversarial examples. We do not anticipate significant negative societal or ethical impacts beyond those already inherent in research on adversarial robustness.

\section{Pseudo Codes for the Experiments}
\label{app:code}

We compute the net amplification signal $\beta$ using Algorithm \ref{alg:dnd1}. The pseudo-code for the adaptive attack targeting JAD is given in Algorithm \ref{alg:adaptive_attack}.

\begin{algorithm}[h]
\caption{Measuring the Net Amplification Ratio}
\label{alg:dnd1}
\begin{algorithmic}[1]

\Require Model $g$, clean image $x$, true label $y$, Attack Function $A$
\Ensure Net Amplification Ratio $\beta$

\Statex
\Function{NetAmplification}{$g, x, y, A$}

    \State $x^{\text{adv}} \gets A(g, x, y)$ 
        \Comment{Generate the adversarial example}
    
    \State $Z \gets g.\text{get\_all\_activations}(x)$ 
        \Comment{Get activations for all $n$ layers}
    \State $Z^{\text{adv}} \gets g.\text{get\_all\_activations}(x^{\text{adv}})$
    
    \State Initialize list $Impacts$
    \For{$i = 1$ to $n$}
        \State $d_i \gets \| Z^{\text{adv}}[i] - Z[i] \|_{2}$ 
            \Comment{Calculate L2-distance}
        \State Append $d_i$ to $Impacts$
    \EndFor
    
    \State $d_1 \gets Impacts[1]$
    \State $d_n \gets Impacts[n]$
    
    \If{$d_1 > 0$}
        \State $\beta \gets d_n / d_1$
    \Else
        \State $\beta \gets 0$
    \EndIf
    
    \State \Return $\beta$
\EndFunction

\end{algorithmic}
\end{algorithm}

\begin{algorithm}[h]
\caption{Adaptive attack against JAD}
\label{alg:adaptive_attack}
\begin{algorithmic}[1]

\Require Classifier $g$, clean image $x$, true label $y$
\Require Perturbation budget $\epsilon$, step size $\alpha$, iterations $K$
\Require EOT trials $T$, balancing weight $\lambda$, quality range $Q$
\Ensure Adversarial example $x^{\text{adv}}$

\Statex
\Function{AdaptiveJAD}{$g, x, y, \epsilon, \alpha, K, T, \lambda, Q$}

    \State $x^{\text{adv}} \gets x + \mathcal{U}(-\epsilon, \epsilon)$
        \Comment{Random initialization}
    \State $x^{\text{adv}} \gets \text{Clamp}(x^{\text{adv}}, 0, 1)$

    \For{$k = 1$ to $K$}

        \State $\mathcal{L} \gets \mathcal{L}_{\text{CE}}(g(x^{\text{adv}}), y)$
            \Comment{Loss on current adversarial example}
        \State $f \gets \nabla_{x^{\text{adv}}} \mathcal{L}$

        \State Initialize $f_{\text{robust}} \gets 0$
            \Comment{Expected gradient over JPEG-sanitized inputs}
        \For{$t = 1$ to $T$}
            \State Sample $q \sim \mathcal{U}(Q)$
            \State $x^{\text{san}} \gets \text{JPEG}_{\text{STE}}(x^{\text{adv}}, q)$
                \Comment{JPEG with straight-through estimator}
            \State $\mathcal{L}^{\text{san}} \gets \mathcal{L}_{\text{CE}}(g(x^{\text{san}}), y)$
            \State $f_{\text{robust}} \gets f_{\text{robust}} +
                   \nabla_{x^{\text{adv}}} \mathcal{L}^{\text{san}}$
        \EndFor
        \State $f_{\text{robust}} \gets f_{\text{robust}} / T$

        \State $f_{\text{total}} \gets f + \lambda \cdot f_{\text{robust}}$
            \Comment{Combine gradients}
        \State $x^{\text{adv}} \gets x^{\text{adv}} + \alpha \cdot \text{sign}(f_{\text{total}})$

        \State $x^{\text{adv}} \gets \text{Clamp}(x^{\text{adv}}, x - \epsilon, x + \epsilon)$
            \Comment{Project to $\ell_\infty$-ball}
        \State $x^{\text{adv}} \gets \text{Clamp}(x^{\text{adv}}, 0, 1)$
            \Comment{Clip to valid pixel range}
    \EndFor

    \State \Return $x^{\text{adv}}$
\EndFunction

\end{algorithmic}
\end{algorithm}

\end{document}